\journal{Expert Systems with Applications}
\newtheorem{theorem}{Theorem}
\begin{document}

\begin{frontmatter}

\title{Linear classifier design under heteroscedasticity in Linear Discriminant Analysis}

\author[label1]{Kojo~Sarfo~Gyamfi\corref{cor1}} 
\ead{gyamfik@uni.coventry.ac.uk}

\author[label1]{James~Brusey}
\ead{j.brusey@coventry.ac.uk}

\author[label1]{Andrew~Hunt}
\ead{ab8187@coventry.ac.uk}

\author[label1]{Elena~Gaura}
\ead{csx216@coventry.ac.uk}

\cortext[cor1]{Corresponding author:}

\address[label1]{Faculty of Engineering and Computing, Coventry University, Coventry, CV1 5FB, United Kingdom}

\begin{abstract}
Under normality and homoscedasticity assumptions, Linear Discriminant Analysis (LDA) is known to be optimal in terms of minimising the Bayes error for binary classification. In the heteroscedastic case, LDA is not guaranteed to minimise this error. Assuming heteroscedasticity, we derive a linear classifier, the Gaussian Linear Discriminant (GLD), that directly minimises the Bayes error for binary classification. In addition, we also propose a local neighbourhood search (LNS) algorithm to obtain a more robust classifier if the data is known to have a non-normal distribution. We evaluate the proposed classifiers on two artificial and ten real-world datasets that cut across a wide range of application areas including handwriting recognition, medical diagnosis and remote sensing, and then compare our algorithm against existing LDA approaches and other linear classifiers. The GLD is shown to outperform the original LDA procedure in terms of the classification accuracy under heteroscedasticity. While it compares favourably with other existing heteroscedastic LDA approaches, the GLD requires as much as $60$ times lower training time on some datasets. Our comparison with the support vector machine (SVM) also shows that, the GLD, together with the LNS, requires as much as $150$ times lower training time to achieve an equivalent classification accuracy on some of the datasets. Thus, our algorithms can provide a cheap and reliable option for classification in a lot of expert systems. 
\end{abstract}

\begin{keyword}
LDA, Heteroscedasticity, Bayes Error, Linear Classifier
\end{keyword}

\end{frontmatter}

\section{Introduction}
\label{}
In many applications one encounters the need to classify a given object under one of a number of distinct groups or classes based on a set of features known as the feature vector. A typical example is the task of classifying a machine part under one of a number of health states. Other applications that involve classification include face detection, object recognition, medical diagnosis, credit card fraud prediction and machine fault diagnosis.

A common treatment of such classification problems is to model the conditional density functions of the feature vector \citep{ng2002discriminative}. Then, the most likely class to which a feature vector belongs can be chosen as the class that maximises the a posteriori probability of the feature vector. This is known as the maximum a posteriori (MAP) decision rule.

Let $K$ be the number of classes, $\mathcal{C}_k$ be the $k$th class, $\textbf{x}$ be a feature vector and $\mathcal{D}_k$ be training samples belonging to the $k$th class $(k\in \{ 1,2,...,K\} )$. The MAP decision rule for the classification task is then to choose the most likely class of $\textbf{x}$, $\mathcal{C}^*(\textbf{x})$ given as:
\begin{equation}
\mathcal{C}^*(\textbf{x})=\arg \max_{\mathcal{C}_k} p(\mathcal{C}_k|\textbf{x}), \quad k\in \{ 1,2,...,K\}
\end{equation}

We assume for the moment that there are only $K=2$ classes, i.e. binary classification (we consider multi-class classification in a later section). Then, using Bayes' rule, the two posterior probabilities can be expressed as:
\begin{equation}
p(\mathcal{C}_k|\textbf{x})=\frac{p(\textbf{x}|\mathcal{C}_k)\times p(\mathcal{C}_k)}{p(\textbf{x})}, \quad {k\in \{ 1,2\} }
\end{equation} 

It is often the case that the prior probabilities $p(\mathcal{C}_1)$ and $p(\mathcal{C}_2)$ are known, or else they may be estimable from the relative frequencies of $\mathcal{D}_1$ and $\mathcal{D}_2$ in $\mathcal{D}$ where $\mathcal{D}=\mathcal{D}_1 \cup \mathcal{D}_2$. Let these priors be given by $\pi_1$ and $\pi_2$ respectively for class $\mathcal{C}_1$ and $\mathcal{C}_2$. Then, the likelihood ratio defined as:
\begin{equation}
\lambda(\textbf{x})=\frac{p(\textbf{x}|\mathcal{C}_1)}{p(\textbf{x}|\mathcal{C}_2)}
\end{equation}
is compared against a threshold defined as $\tau=\pi_2/\pi_1$ so that one decides on class $\mathcal{C}_1$ if $\lambda(\textbf{x})\geq \tau$ and class $\mathcal{C}_2$ otherwise.

Linear Discriminant Analysis (LDA) proceeds from here with two basic assumptions \citep[Chapter~8]{izenman2009modern}:
\begin{enumerate}
	\item The conditional probabilities $p(\textbf{x}|\mathcal{C}_1)$ and $p(\textbf{x}|\mathcal{C}_2)$ have multivariate normal distributions.
	\item The two classes have equal covariance matrices, an assumption known as homoscedasticity.
\end{enumerate}

Let $\bar{\textbf{x}}_1$, $\bm{\Sigma}_1$ be the mean and covariance matrix of $\mathcal{D}_1$ and $\bar{\textbf{x}}_2$, $\bm{\Sigma}_2$ be the mean and covariance of $\mathcal{D}_2$ respectively. Then, for $k\in \{ 1,2\}$,
\begin{equation}
p(\textbf{x}|\mathcal{C}_k)=\frac{1}{\sqrt{(2\pi)^d\det(\bm{\Sigma}_k)}}\exp\bigg [-\frac{1}{2}(\textbf{x}-\bar{\textbf{x}}_k)^T\bm{\Sigma}_k^{-1}(\textbf{x}-\bar{\textbf{x}}_k) \bigg]
\end{equation}
where $d$ is the dimensionality of $\mathcal{X}$, which is the feature space of $\textbf{x}$.
Given the above definitions of the conditional probabilities, one may obtain a log-likelihood ratio given as:
\begin{align}
&\ln \lambda(\textbf{x})\nonumber\\
&=\frac{1}{2}\ln \frac{\det \bm{\Sigma}_2}{\det \bm{\Sigma}_1}+\frac{1}{2} \bigg [(\textbf{x}-\bar{\textbf{x}}_2)^T\bm{\Sigma}_2^{-1}(\textbf{x}-\bar{\textbf{x}}_2)-(\textbf{x}-\bar{\textbf{x}}_1)^T\bm{\Sigma}_1^{-1}(\textbf{x}-\bar{\textbf{x}}_1) \bigg]
\end{align}	
which is then compared against $\ln \tau$ so that $\mathcal{C}_1$ is chosen if $\ln \lambda(\textbf{x})\geq \ln \tau$, and $\mathcal{C}_2$ otherwise. Thus, the decision rule for classifying a vector $\textbf{x}$ under class $\mathcal{C}_1$ can be rewritten as:
\begin{equation}
(\textbf{x}-\bar{\textbf{x}}_2)^T\bm{\Sigma}_2^{-1}(\textbf{x}-\bar{\textbf{x}}_2)-(\textbf{x}-\bar{\textbf{x}}_1)^T\bm{\Sigma}_1^{-1}(\textbf{x}-\bar{\textbf{x}}_1) \geq  \ln \frac{\tau^2\det \bm{\Sigma}_1}{\det \bm{\Sigma}_2}
\end{equation}
In general, this result is a quadratic discriminant. However, a linear classifier is often desired for the following reasons:	
\begin{enumerate}
	\item A linear classifier is robust against noise since it tends not to overfit \citep{mika1999fisher}.	
	\item A linear classifier has relatively shorter training and testing times \citep{yuan2012recent}.
	\item Many linear classifiers allow for a transformation of the original feature space into a higher dimensional feature space using the kernel trick for better classification in the case of a non-linear decision boundary \citep[Chapter~6]{bishop2006pattern}.
\end{enumerate}

By calling on the assumption of homoscedasticity, i.e. $\bm{\Sigma}_1=\bm{\Sigma}_2=\bm{\Sigma}_x$, the original quadratic discriminant given by (6) for classifying a given vector $\textbf{x}$ decomposes into the following linear decision rule:
\begin{equation}
\textbf{x}^T\bm{\Sigma}_x^{-1}(\bar{\textbf{x}}_1-\bar{\textbf{x}}_2) \mathop{\gtreqless}_{\mathcal{C}_2}^{\mathcal{C}_1} \ln\tau+\frac{1}{2}(\bar{\textbf{x}}_1^T\bm{\Sigma}_x^{-1}\bar{\textbf{x}}_1-\bar{\textbf{x}}_2^T\bm{\Sigma}_x^{-1}\bar{\textbf{x}}_2)
\end{equation}
Here, $\bm{\Sigma}_x^{-1}(\bar{\textbf{x}}_1-\bar{\textbf{x}}_2)$ is a vector of weights denoted by $\textbf{w}$ and $\ln\tau+\frac{1}{2}(\bar{\textbf{x}}_1^T\bm{\Sigma}_x^{-1}\bar{\textbf{x}}_1-\bar{\textbf{x}}_2^T\bm{\Sigma}_x^{-1}\bar{\textbf{x}}_2)$ is a threshold denoted by $w_0$. This linear classifier is also known as Fisher’s Linear Discriminant. If only the weight vector \textbf{w} is required for dimensionality reduction, \textbf{w} may be obtained by maximising Fisher’s criterion \citep{fisher1936use}, given by:
\begin{equation}
S=\dfrac{\textbf{w}^T(\bar{\textbf{x}}_1-\bar{\textbf{x}}_2)(\bar{\textbf{x}}_1-\bar{\textbf{x}}_2)^T\textbf{w}}{\textbf{w}^T\bm{\Sigma}_x\textbf{w}}
\end{equation}
where $\bm{\Sigma}_{x}=n_1\bm{\Sigma}_1+n_2\bm{\Sigma}_2$ and $n_1$, $n_2$ are the cardinalities of $\mathcal{D}_1$ and $\mathcal{D}_2$ respectively.

LDA is the optimal Bayes' classifier for binary classification if the normality and homoscedasticity assumptions hold \citep{hamsici2008bayes} \citep[Chapter~8]{izenman2009modern}. It demands only the computation of the dot product between $\textbf{w}$ and $\textbf{x}$, which is a relatively computationally inexpensive operation.

As a supervised learning algorithm, LDA is performed either for dimensionality reduction (usually followed by classification) \citep[Chapter~16]{barber2012bayesian}, \citep{buturovic1994toward,duin2004linear,sengur2008expert}, or directly for the purpose of statistical classification \citep[Chapter~4]{fukunaga2013introduction}\citep{izenman2009modern,mika1999fisher}. LDA has been applied to several problems such as medical diagnosis e.g. \citep{sharma2008cancer,coomans1978application,sengur2008expert,polat2008cascade}, face and object recognition e.g. \citep{song2007parameterized,chen2000new,liu2007efficient,yu2001direct} and credit card fraud prediction e.g. \citep{mahmoudi2015detecting}. The widespread use of LDA in these areas is not because the datasets necessarily satisfy the normality and homoscedasticity assumptions, but mainly due to the robustness of LDA against noise, being a linear model \citep{mika1999fisher}. Since the linear Support Vector Machine (SVM) can be quite expensive to train, especially for large values of $K$ or $n$ ($n=n_1+n_2$), LDA is often relied upon \citep{hariharan2012discriminative}. 

Yet, practical implementation of LDA is not without problems. Of note is the small sample size (SSS) problem that LDA faces with high-dimensional data and much smaller training data \citep{sharma2015linear,lu2003regularized}. When $d\gg n$, the scatter matrix $\Sigma_x$ is not invertible, as it is not full-rank. Since the decision rule as given by (7) requires the computation of the inverse of $\Sigma_x$, the singularity of $\Sigma_x$ makes the solution infeasible. In works by, for example, \citep{liu2007efficient,paliwal2012improved}, this problem is overcome by taking the Moore-Penrose pseudo-inverse of the scatter matrix, rather than the ordinary matrix inverse. \cite{sharma2008cancer} use a gradient descent approach where one starts from an initial solution of $\textbf{w}$ and moves in the negative direction of the gradient of Fisher's criterion (8). This method avoids the computation of an inverse altogether. Another approach to solving the SSS problem involves adding a scalar multiple of the identity matrix to the scatter matrix to make the resulting matrix non-singular, a method known as regularised discriminant analysis \citep{friedman1989regularized,lu2003regularized}.

However, for a given dataset that does not satisfy the homoscedasticity or normality assumption, one would expect that modifications to the original LDA procedure accounting for these violations would yield an improved performance. One such modification, in the case of a non-normal distribution, is the mixture discriminant analysis \citep{hastie1996discriminant,mclachlan2004discriminant,ju2003gaussian} in which a non-normal distribution is modelled as a mixture of Gaussians. However, the parameters of the mixture components or even the number of mixture components, are usually not known a priori. Other non-parametric approaches to LDA that remove the normality assumption involve using local neighbourhood structures \citep{cai2007locality,fukunaga1983nonparametric,li2009nonparametric} to construct a similarity matrix instead of the scatter matrix $\Sigma_x$ used in LDA. However, these approaches aim at linear dimensionality reduction, rather than linear classification. Another modification, in the case of a non-linear decision boundary between $\mathcal{D}_1$ and $\mathcal{D}_2$, is the Kernel Fisher Discriminant (KFD) \citep{mika1999fisher,zhao2009multiclass,polat2008cascade}. KFD maps the original feature space $\mathcal{X}$ into some other space $\mathcal{Y}$ (usually higher dimensional) via the kernel trick \citep{mika1999fisher}. While the main utility of the kernel is to guarantee linear separability in the transformed space, the kernel may also be employed to transform non-normal data into one that is near-normal.

Our proposed method differs from the above approaches in that we primarily consider violation of the homoscedasticity assumption, and do not address the SSS problem. We seek to provide a linear approximation to the quadratic boundary given by (6) under heteroscedasticity without any kernel transformation; we note that several heteroscedastic LDA approaches have been proposed to this effect. Nevertheless, for reasons which we highlight in the next section, our contributions in this paper are stated explicitly as follows:

\begin{enumerate}
	\item We propose a novel linear classifier, which we term the Gaussian Linear Discriminant (GLD), that directly minimises the Bayes error under heteroscedasticity via an efficient optimisation procedure. This is presented in Section 3.
	\item We propose a local neighbourhood search method to provide a more robust classifier if the data has a non-normal distribution (Section 4).
\end{enumerate}

\section{Related Work}
Under the heteroscedasticity assumption, many LDA approaches have been proposed among which we mention \citep[Chapter~4]{fukunaga2013introduction},\citep{zhang2008equalized,mclachlan2004discriminant,duin2004linear,decell1976feature,decell1977feature,malina1981extended,loog2002non}. As it is known that Fisher's criterion (whose maximisation is equivalent to the LDA derivation described in the Introduction section) only takes into account the difference in the projected class means, existing heteroscedastic LDA approaches tend to obtain a generalisation on Fisher's criterion. In the work of \citep{loog2002non}, for instance, a directed distance matrix (DDM) known as the Chernoff distance, which takes into account the difference in covariance matrices between the two classes as well as the projected class means, is maximised instead of Fisher's criterion (8). The same idea employing the Chernoff criterion is used by \citep{duin2004linear}. A wider class of Bregman divergences including the Bhattacharya distance \citep{decell1976feature} and the Kullback-Leibler divergence \citep{decell1977feature} have also been used for heteroscedastic LDA, as Fisher's criterion can be considered a special case of these measures when the covariance matrices of the classes are equal.

However, most of these approaches aim at linear dimensionality reduction, which involves finding a linear transformation that transforms the original data into one of reduced dimensionality, while at the same time maximising the discriminatory information between the classes. Our focus with this paper, however, is not on dimensionality reduction, but on obtaining a Bayes optimal linear classifier for binary classification assuming that the covariance matrices are not equal. As far as we know, the closest work to ours in this regard are the works by \citep{marks1974discriminant,anderson1962classification,peterson1966method,fukunaga2013introduction}

Obtaining the Bayes optimal linear classifier involves minimising the probability of misclassification $p_e$ as given by:
\begin{equation}
p_e=\pi_1p(y<w_0|\mathcal{C}_1)+\pi_2p(y \geq w_0|\mathcal{C}_2)
\end{equation}
where $y=\textbf{w}^T\textbf{x}$.
Unfortunately, there is no closed-form solution to the minimisation of (9) \citep{anderson1962classification}. Thus, an iterative procedure is inevitable in order to obtain the Bayes optimal linear classifier.

In the work of \citep{marks1974discriminant}, for example, the iterative procedure described is to solve for $\textbf{w}$ and $w_0$ as given by
\begin{align}
&\textbf{w}=\big[s_1\bm{\Sigma}_1+s_2\bm{\Sigma}_2\big]^{-1}(\bar{\textbf{x}}_1-\bar{\textbf{x}}_2)\nonumber\\
& w_0=\mu_1-s_1\sigma_1^2=\mu_2+s_2\sigma_2^2
\end{align}
by obtaining the optimal values of $s_1$ and $s_2$ via systematic trial and error. We denote this heteroscedastic LDA procedure by R-HLD-2, for the reason that the two parameters $s_1$ and $s_2$ are chosen at random.

\citep{anderson1962classification} makes the observation that if the weight vector $\textbf{w}$ and the threshold $w_0$ are both multiplied by the same positive scalar, the decision boundary remains unchanged. Therefore, by multiplying (10) through by the scalar $s_1+s_2$, $\textbf{w}$ and $w_0$ can be put in the form of:
\begin{align}
&\textbf{w}=\big[s\bm{\Sigma}_2+(1-s)\bm{\Sigma}_1\big]^{-1}(\bar{\textbf{x}}_1-\bar{\textbf{x}}_2)\nonumber\\
& w_0=\mu_1-(1-s)\sigma_1^2=\mu_2+s\sigma_2^2
\end{align}
Still, the optimal value of $s$ has to be chosen by systematic trial and error. We denote this heteroscedastic LDA approach by R-HLD-1, for the reason that only one parameter $s$ is chosen at random. As we show in the next section, $s$ is unbounded. Therefore, the difficulty faced by this approach is that $s$ has to be chosen from the interval $(-\infty,\infty)$, so that the probability of finding the optimal $s$ for a given dataset is low, without extensive trial and error to limit the choice of $s$ to some finite interval $[a,b]$. 

To avoid the unguided trial and error procedure in \citep{marks1974discriminant,anderson1962classification}, \citep{peterson1966method} and \citep[Chapter~4]{fukunaga2013introduction} propose a theoretical approach described below:
\begin{enumerate}
	\item Change $s$ from $0$ to $1$ with small step increments $\Delta s$.
	
	\item Evaluate $\textbf{w}$ as given by:	
	\begin{equation}
	\textbf{w}=\big[s\bm{\Sigma}_1+(1-s)\bm{\Sigma}_2\big]^{-1}(\bar{\textbf{x}}_1-\bar{\textbf{x}}_2)
	\end{equation}
	
	\item Evaluate $w_0$ as given by:
	\begin{equation}
	w_0=\frac{s\mu_2\sigma_1^2+(1-s)\mu_1\sigma_2^2}{s\sigma_1^2+(1-s)\sigma_2^2}
	\end{equation}
	
	\item Compute the probability of misclassification $p_e$.
	
	\item Choose $\textbf{w}$ and $w_0$ that minimise $p_e$. 	 
	
\end{enumerate}
We refer to this procedure as C-HLD, for the reason that the optimal $s$ is constrained in the interval $[0,1]$.

However, we highlight two main problems with the above C-HLD procedure:
\begin{enumerate}
	\item There is no obvious choice of the step rate $\Delta s$. Too small a value of $\Delta s$ will demand too many matrix inversions in Step 2, as there will be too many $s$ values. On the other hand, if $\Delta s$ is too large, the optimal $s$ may not be refined enough, and the $\textbf{w}$ obtained may not be optimal. Specifically, the change in $\textbf{w}$ that results from a small change in $s$ is given as:
   \begin{equation}
	\mathrm{d}\textbf{w}=\big(s\bm{\Sigma}_2+(1-s)\bm{\Sigma}_1\big)^{-1}(\bm{\Sigma}_1-\bm{\Sigma}_2)\big(s\bm{\Sigma}_2+(1-s)\bm{\Sigma_1}\big)^{-1}(\bar{\textbf{x}}_1-\bar{\textbf{x}}_2)\mathrm{d}s
   \end{equation}
   which can affect the classification accuracy.
	
    \item The solution obtained this way is only locally optimal as $s$ is bounded in the interval $[0,1]$. As we show in the next section, $s$ is actually unbounded. When there is a class imbalance \citep{xue2008unbalanced}, the optimal $s$ may be found outside the interval $[0,1]$ which can lead to poor classification accuracy. 
\end{enumerate}
Our proposed algorithm, which is described in the next section, unlike the trial and error approach by \citep{marks1974discriminant,anderson1962classification}, has a principled optimisation procedure, and unlike \citep{fukunaga2013introduction,peterson1966method} does not encounter the problem of choosing an inappropriate $\Delta s$, nor restricts $s$ to the interval $[0,1]$. Consequently, our proposed algorithm achieves a far lower training time than the C-HLD, R-HLD-1 and R-HLD-2, for roughly the same classification accuracy.

\section{Gaussian Linear Discriminant}
Let $\textbf{w}\in \mathbb{R}^d$ be a vector of weights, and $w_0\in\mathbb{R}$, a threshold such that:
\begin{equation}
\mathcal{C}^*(\textbf{x})=
\begin{cases}
\mathcal{C}_1 & \quad \text{if} \quad y=\textbf{w}^T\textbf{x}\geq w_0\\
\mathcal{C}_2 & \quad \text{if} \quad y=\textbf{w}^T\textbf{x}<w_0 \\
\end{cases}
\end{equation}
Since $\textbf{x}$ is assumed to have a multivariate normal distribution in classes $\mathcal{C}_1$ and $\mathcal{C}_2$, $y$ has a mean of $\mu_1$ and a variance of $\sigma_1^2$ for class $\mathcal{C}_1$ and a mean of $\mu_2$ and a variance of $\sigma_2^2$ for class $\mathcal{C}_2$ given as:	
\begin{equation}
\mu_1=\textbf{w}^T\bar{\textbf{x}}_1 \quad \mu_2=\textbf{w}^T\bar{\textbf{x}}_2 \quad \sigma_1^2=\textbf{w}^T\bm{\Sigma}_1\textbf{w} \quad \sigma_2^2=\textbf{w}^T\bm{\Sigma}_2\textbf{w}
\end{equation}

With reference to the Bayes error of (9), the individual misclassification probabilities can be expressed as:
\begin{align}
& p(y<w_0|\mathcal{C}_1)\nonumber \\
&=\int_{-\infty}^{w_0}\frac{1}{\sqrt{2\pi}\sigma_1}\exp \bigg[-\frac{(\zeta-\mu_1)^2}{2\sigma_1^2} \bigg]d\zeta=1-Q\bigg(\frac{w_0-\mu_1}{\sigma_1}\bigg)
\end{align}
and
\begin{equation}
p(y\geq w_0|\mathcal{C}_2)=\int_{w_0}^{\infty}\frac{1}{\sqrt{2\pi}\sigma_2}\exp \bigg[-\frac{(\zeta-\mu_2)^2}{2\sigma_2^2} \bigg]d\zeta=Q\bigg(\frac{w_0-\mu_2}{\sigma_2}\bigg)
\end{equation}
where $Q(^.)$ is the Q-function.
Therefore, the Bayes error to be minimised may be rewritten as:
\begin{equation}
p_e=\pi_1\big[1-Q(z_1)\big]+\pi_2\big[Q(z_2)\big]
\end{equation}
where
\begin{equation}
z_1=\frac{w_0-\mu_1}{\sigma_1} \quad \text{and} \quad z_2=\frac{w_0-\mu_2}{\sigma_2}
\end{equation} 

Our aim is to find a local minimum of $p_e$. A necessary condition is for the gradient of $p_e$ to be zero, i.e.,
\begin{equation}
\nabla p_e(\textbf{w},w_0)=\bigg[\frac{\partial p_e}{\partial \textbf{w}^T},\frac{\partial p_e}{\partial w_0} \bigg]^T=\textbf{0}
\end{equation}
From (9), it can be shown that:
\begin{equation}
\frac{\partial p_e}{\partial \textbf{w}}=\pi_1 \bigg(\frac{1}{\sqrt{2\pi}}e^{-z_1^2/2}\frac{\partial z_1}{\partial\textbf{w}} \bigg) -\pi_2\bigg(\frac{1}{\sqrt{2\pi}}e^{-z_2^2/2}\frac{\partial z_2}{\partial \textbf{w}}  \bigg)
\end{equation}
From (20), however, we obtain the following:
\begin{equation}
\frac{\partial z_1}{\partial\textbf{w}}=\frac{-\sigma_1\bar{\textbf{x}}_1-z_1\bm{\Sigma}_1\textbf{w}}{\sigma_1^2} \quad \text{and} \quad \frac{\partial z_2}{\partial\textbf{w}}=\frac{-\sigma_2\bar{\textbf{x}}_2-z_2\bm{\Sigma}_2\textbf{w}}{\sigma_2^2}
\end{equation}
Therefore,
\begin{equation}
\frac{\partial p_e}{\partial \textbf{w}}=\frac{1}{\sqrt{2\pi}}\bigg[ -\pi_1e^{-z_1^2/2} \bigg(\frac{\sigma_1\bar{\textbf{x}}_1+z_1\bm{\Sigma}_1\textbf{w}}{\sigma_1^2}\bigg) + \pi_2e^{-z_2^2/2} \bigg(\frac{\sigma_2\bar{\textbf{x}}_2+z_2\bm{\Sigma}_2\textbf{w}}{\sigma_2^2}\bigg)\bigg]
\end{equation}
It can similarly be shown from (9) that, 
\begin{equation}
\frac{\partial p_e}{\partial w_0}=\pi_1 \bigg(\frac{1}{\sqrt{2\pi}}e^{-z_1^2/2}\frac{\partial z_1}{\partial w_0} \bigg)-\pi_2\bigg(\frac{1}{\sqrt{2\pi}}e^{-z_2^2/2}\frac{\partial z_2}{\partial w_0}  \bigg)
\end{equation}
Again, from (20),
\begin{equation}
\frac{\partial z_1}{\partial w_0}=\frac{1}{\sigma_1} \quad \text{and} \quad \frac{\partial z_2}{\partial w_0}=\frac{1}{\sigma_2}
\end{equation}
Therefore,
\begin{equation}
\frac{\partial p_e}{\partial w_0}=\frac{\pi_1}{\sqrt{2\pi}} \bigg(\frac{1}{\sigma_1}e^{-z_1^2/2}\bigg)-\frac{\pi_2}{\sqrt{2\pi}}\bigg(\frac{1}{\sigma_2}e^{-z_2^2/2}  \bigg)
\end{equation}

Now, equating the gradient $\nabla p_e(\textbf{w},w_0)$ to zero, the following set of equations are obtained:
\begin{equation}
\bigg(\frac{\pi_2z_2}{\sigma_2^2}e^{-z_2^2/2}\bm{\Sigma}_2-\frac{\pi_1z_1}{\sigma_1^2}e^{-z_1^2/2}\bm{\Sigma}_1\bigg)\textbf{w}=\bigg(\frac{\pi_1}{\sigma_1}e^{-z_1^2/2} \bigg)\bar{\textbf{x}}_1-\bigg(\frac{\pi_2}{\sigma_2}e^{-z_2^2/2}\bigg)\bar{\textbf{x}}_2
\end{equation}
\begin{equation}
\frac{\pi_1}{\sigma_1}e^{-z_1^2/2}=\frac{\pi_2}{\sigma_2}e^{-z_2^2/2}
\end{equation}
Substituting (29) into (28) yields:
\begin{equation}
\bigg(\frac{z_2}{\sigma_2}\bm{\Sigma}_2-\frac{z_1}{\sigma_1}\bm{\Sigma}_1\bigg)\textbf{w}=(\bar{\textbf{x}}_1-\bar{\textbf{x}}_2)
\end{equation}
Then the vector $\textbf{w}$ can be given by:
\begin{equation}
\textbf{w}=\bigg(\frac{z_2}{\sigma_2}\bm{\Sigma}_2-\frac{z_1}{\sigma_1}\bm{\Sigma}_1\bigg)^{-1}(\bar{\textbf{x}}_1-\bar{\textbf{x}}_2)
\end{equation}

It will be noted however that (31) is still in terms of $w_0$, so that an explicit representation of $w_0$ in terms of $\textbf{w}$ is needed from (29) to substitute in $z_1$ and $z_2$ in (31). This is where our approach most significantly differs from \citep{fukunaga2013introduction}. Solving for $w_0$ from (29) results in the following quadratic:
\begin{equation}
\frac{z_2^2}{2}-\frac{z_1^2}{2}-\ln\bigg(\frac{\tau\sigma_1}{\sigma_2}\bigg)=0
\end{equation}
which can be simplified to:
\begin{equation}
\bigg(\frac{w_0-\mu_2}{\sigma_2}\bigg)^2-\bigg(\frac{w_0-\mu_1}{\sigma_1}\bigg)^2 -2\ln \frac{\tau\sigma_1}{\sigma_2}=0,
\end{equation}
where $\tau$ is given as before as $\tau=\pi_2/\pi_1$. If $\tau$ is defined and not equal to zero, and $\sigma_1^2\not=\sigma_2^2$ (since $\bm{\Sigma}_1\not=\bm{\Sigma}_2$ for heteroscedastic LDA), (33) can be shown to have the following solutions:
\begin{equation}
w_0=\frac{\mu_2\sigma_1^2-\mu_1\sigma_2^2\pm\sigma_1\sigma_2\sqrt{(\mu_1-\mu_2)^2+2(\sigma_1^2-\sigma_2^2)\ln\big(\frac{\tau\sigma_1}{\sigma_2}\big)}}{\sigma_1^2-\sigma_2^2}
\end{equation}

Nevertheless, since there are two solutions to $w_0$ in (34), a choice has to be made as to which of them is substituted into (31). To eliminate one of the solutions, we consider the second-order partial derivative of $p_e$ with respect to $w_0$ evaluated at $w_0$ as given by (34), and determine under what condition it is greater than or equal to zero. This is a second-order necessary condition for $p_e$ to be a local minimum. From (27), it can be shown that:
\begin{equation}
\frac{\partial^2 p_e}{\partial w_0^2}=\frac{\pi_1}{\sqrt{2\pi}} \bigg(-\frac{z_1}{\sigma_1^2}e^{-z_1^2/2}\bigg)+\frac{\pi_2}{\sqrt{2\pi}}\bigg(\frac{z_2}{\sigma_2^2}e^{-z_2^2/2}  \bigg)
\end{equation}
We denote this second-order derivative by $h$. We then consider all possibilities of $z_1$ and $z_2$ (which are the variables in (35) that depend on $w_0$) under three cases, and analyse the sign of $h$ in each.

\subsubsection*{Case 1}
$z_2\leq 0$ and $z_1\geq 0$: then $h$ is trivially non-positive.	
\subsubsection*{Case 2}
$z_2\geq 0$ and $z_1\leq 0$: then $h$ is trivially non-negative.	
\subsubsection*{Case 3}
$z_2>0$ and $z_1>0$ or $z_2<0$ and $z_1<0$: then $h$ is non-negative if and only if
\begin{equation}
\ln\bigg(\frac{\pi_2z_2}{\sigma_2^2}\bigg)-\frac{z_2^2}{2} \geq \ln\bigg(\frac{\pi_1z_1}{\sigma_1^2}\bigg)-\frac{z_1^2}{2}
\end{equation}
i.e.,
\begin{equation}
\ln\bigg(\frac{z_2}{\sigma_2}\bigg/ \frac{z_1}{\sigma_1}\bigg) \geq \frac{z_2^2}{2}-\frac{z_1^2}{2}-\ln\bigg(\frac{\tau\sigma_1}{\sigma_2}\bigg)
\end{equation}
It will be noted that the right-hand side of the inequality (37) is identically zero, as can be seen from (32). Therefore, the condition under which $h$ is greater than or equal to zero is when:
\begin{equation}
\frac{z_2}{\sigma_2} \geq \frac{z_1}{\sigma_1}
\end{equation}	
Note also that Case 2 necessarily satisfies (38) so that we consider (38) as the general inequality for the non-negativity of $h$ for all cases, and thus for $w_0$ to be a local minimum.

Now, when one considers the two solutions of $w_0$ in (35), only the solution given by:
\begin{equation}
w_0=\frac{\mu_2\sigma_1^2-\mu_1\sigma_2^2+\sigma_1\sigma_2\sqrt{(\mu_1-\mu_2)^2+2(\sigma_1^2-\sigma_2^2)\ln\big(\frac{\tau\sigma_1}{\sigma_2}\big)}}{\sigma_1^2-\sigma_2^2}
\end{equation}
satisfies the inequality of (38), i.e., only this choice of $w_0$ corresponds to a local minimum. The proof of this is given in the appendix. 

We may then substitute this expression of $w_0$ into (31) so that (31) is in terms of $\textbf{w}$ only. Even so, $\textbf{w}$ has to be solved for iteratively. This is because (31) has no closed-form solution since $\mu_1,\mu_2,\sigma_1,\sigma_2$ are themselves functions of $\textbf{w}$. As the iterative procedure requires an initial choice of $\textbf{w}$, we use Fisher's choice of the weight vector as given by:
\begin{equation}
\textbf{w}=(n_1\bm{\Sigma}_1+n_2\bm{\Sigma}_2)^{-1}(\bar{\textbf{x}}_1-\bar{\textbf{x}}_2)
\end{equation}
as our initial solution. Again, we mention that $n_1$ and $n_2$ are the cardinalities of $\mathcal{D}_1$ and $\mathcal{D}_2$.  After a number of such iterative updates, the optimal $w_0$ is then solved for from (39).
This algorithm, known as the Gaussian Linear Discriminant (GLD), is described in detail in Algorithm 1.

\begin{algorithm}
	\caption{GLD}\label{alg1}
	\begin{algorithmic}[1]
		\State Input: $\mathcal{D}_1$ and $\mathcal{D}_2$
		\State Evaluate $\bar{\textbf{x}}_1,\bar{\textbf{x}}_2,\bm{\Sigma}_1,\bm{\Sigma}_2$
		\State Initialise $\textbf{w}$: $\textbf{w}=(n_1\bm{\Sigma}_1+n_2\bm{\Sigma}_2)^{-1}(\bar{\textbf{x}}_1-\bar{\textbf{x}}_2)$
		\State Evaluate $\mu_1,\mu_2,\sigma_1^2,\sigma_2^2,z_1,z_2$.
		\While {Stopping criteria are not satisfied}
		\State Solve for $w_0$ from (39)
		\State Evaluate $z_1,z_2$
		\State Evaluate the Bayes error $p_e$
		\State Update $\textbf{w}$ as $\textbf{w}=\bigg(\frac{z_2}{\sigma_2}\bm{\Sigma}_2-\frac{z_1}{\sigma_1}\bm{\Sigma}_1\bigg)^{-1}(\bar{\textbf{x}}_1-\bar{\textbf{x}}_2)$
		\State Evaluate $\mu_1,\mu_2,\sigma_1,\sigma_2$.		
		\EndWhile		
	\end{algorithmic}
\end{algorithm}
Note that by multiplying both $\textbf{w}$ of (31) and $w_0$ proportionally by $c=(\sigma_1z_2-\sigma_2z_1)/\sigma_1\sigma_2$ (due to (38), $c$ is non-negative and hence the discrimination criterion given by (15) is not changed), the GLD may be viewed in terms of the optimal solution of (11), where
\begin{equation}
	s=-\sigma_2z_1/(\sigma_1z_2-\sigma_2z_1).
\end{equation} which is unbounded given the inequality of (38).
However, unlike \citep{anderson1962classification,marks1974discriminant}, $s$ is not chosen by systematic trial and error, and unlike \citep{fukunaga2013introduction}, $s$ is not varied between $0$ and $1$ at small step increments. Instead, since $s$ is a function of $\textbf{w}$ and $w_0$, our algorithm may be interpreted as obtaining increasingly refined values of $s$ by improving upon $\textbf{w}$ and $w_0$ starting from Fisher's solution, as is described in Algorithm 1.

\subsection{Stopping Criteria}
The GLD algorithm may be terminated under any of the following conditions:
\begin{enumerate}
	\item When the change in the objective function $p_e$ remains within a certain tolerance $\epsilon_1$ for a number of consecutive iterations.
	\item When the change in the norm of $\textbf{w}$ remains within a certain tolerance $\epsilon_2$ for a number of consecutive iterations.
    \item When the gradient of $p_e$ as given by (21) remains within a certain tolerance $\epsilon_3$ for a number of consecutive iterations.
	\item After a fixed number of iterations $I$, if convergence is slow.
\end{enumerate}
At the end of the algorithm, the final solution may be chosen either as the solution to which the iterations converge, or the solution corresponding to the minimum $p_e$ found in the iterative updates.

\subsection{Multiclass Classification}
Suppose now that there are $K>2$ classes in the dataset $\mathcal{D}$, then the classification problem may be reduced to a number of binary classification problems. The two main approaches usually taken for this reduction are the One-vs-All (OvA) and One-vs-One (OvO) strategies \citep{bishop2006pattern,hsu2002comparison}.

\subsubsection{One-vs-All (OvA)} In OvA, one trains a classifier to discriminate between one class and all other classes. Thus, there are $K$ different classifiers. An unknown vector $\textbf{x}$ is then tested on all $K$ classifiers so that the class corresponding to the classifier with the highest discriminant score is chosen. However, with respect to the proposed GLD algorithm, this is an ill-suited approach. This is because the collection of all other classes on one side of the discriminant will not necessarily have a normal distribution, and could in fact be multimodal, if the means are well-separated. Since our algorithm is built on strong normality assumptions of the data on each side of the discriminant, the GLD, as has been formulated, is expected to perform poorly.

\subsubsection{One-vs-One}
In OvO, a classifier is trained to discriminate between every pair of classes in the dataset, ignoring the other $K-2$ classes. Thus, there are $K(K-1)/2$ unique classifiers that may be constructed. Again, an unknown vector $\textbf{x}$ is tested on all $K(K-1)/2$ classifiers. The predicted classes for all the classifiers are then tallied so that the class that occurs most frequently is chosen. This is equivalent to a majority vote decision. In a lot of cases, however, there is no clear-cut winner, as more than one class may have the highest number of votes. In such a case, the most likely class is often chosen randomly between those most frequently occurring classes. The GLD provides a more appropriate means for breaking such ties, by making use of the minimised Bayes error $p_e$ for each classifier. Specifically, one may instead use a weighted voting system, where the count of every predicted class is weighted by $1-p_e$, since $p_e$ provides an appropriate measure of uncertainty associated with each classifier output. Thus, the decision rule is reduced to choosing the maximum weighted vote among the $K$ classes.

Note that even though the GLD minimises the Bayes error for each classifier, the overall Bayes error for a multiclass problem may not be minimised by using multiple binary classifiers.

\section{Non-normal Distributions}
So far, the fundamental assumption that has been used to derive the GLD is that the data in each class has a normal distribution. Thus, for an unknown non-normal distribution, the linear classifier we have obtained does not minimise the Bayes error for that unknown distribution. We argue, however, that if this unknown distribution is nearly-normal \citep{mudholkar2000epsilon}, then a more robust linear classifier may be found in some neighbourhood of the GLD. For this reason, we use a local neighbourhood search algorithm to explore the region in $\mathbb{R}^{d+1}$ around the GLD to obtain the classifier that minimises the number of misclassifications on the training dataset. We do this by perturbing each of the $d+1$ vector elements in the optimal $\tilde{\textbf{w}}=[w_0,\textbf{w}^T]^T$ obtained from the GLD procedure by a small amount $\delta\tilde{w}_i$. After every perturbation, the resulting classifier is evaluated on the test dataset. This procedure is repeated as described in Algorithm 2 until the stopping criterion is satisfied.

\begin{algorithm}[!tbph]
	\caption{Local Neighbourhood Search (LNS)}\label{alg2}
	\begin{algorithmic}[1]
		\State Input: Optimal $\tilde{\textbf{w}}=[w_0,\textbf{w}^T]^T$ obtained from the GLD.
		\While {Stopping criterion is not satisfied}
		\State Let $\tilde{\textbf{w}}$ be the current solution.
		\For {$i\gets 1$ to $d$}
		\State $\textbf{v}^+\gets\tilde{\textbf{w}}$, $\textbf{v}^-\gets\tilde{\textbf{w}}$.
		\State $\textbf{v}^+\gets v_i^++\delta v_i^+$
		\State Evaluate the misclassifications on the training set using $\textbf{v}^+$
		\State $\textbf{v}^-\gets v_i^--\delta v_i^-$
		\State Evaluate the misclassifications on the training set using $\textbf{v}^-$
		\EndFor
		\State Set the classifier with the minimum number of misclassifications as the current solution $\tilde{\textbf{w}}$.
		\EndWhile
		\State Choose the classifier with the smallest number of misclassifications.	
	\end{algorithmic}
\end{algorithm}
The algorithm is terminated after a certain maximum number of iterations $R$ is reached. Additionally, one may perform an early termination if after a predefined number of iterations $r_{max}$, there is no improvement in the minimum number of misclassifications on the training dataset that has been found in the search.

\section{Experimental Validation}
We validate our proposed algorithm on two artificial datasets denoted by D1 and D2, as well as on ten real-world datasets taken from the University of California, Irvine (UCI) Machine Learning Repository. These datasets are shown in Table 1, and cut across a wide range of applications including handwriting recognition, medical diagnosis, remote sensing and spam filtering. D1 and D2 are normally distributed with different covariance matrices. For D1, we generate $1000$ samples for class $\mathcal{C}_1$ and $2000$ samples for class $\mathcal{C}_2$ using the following Gaussian parameters:
\begin{align}
& \bar{\textbf{x}}_2=[3.86,3.10,0.84,0.84,1.64,1.08,0.26,0.01]^T, \nonumber\\ &\bm{\Sigma}_2=\text{diag}(8.41,12.06,0.12,0.22,1.49,1.77,0.35,2.73) \nonumber\\
& \bar{\textbf{x}}_1=\bar{\textbf{x}}_2-0.3, \quad \bm{\Sigma}_1=\textbf{I}
\end{align}
For D2, we generate $2000$ samples for class $\mathcal{C}_1$ and $4000$ samples for class $\mathcal{C}_2$ using the following Gaussian parameters:
\begin{align}
& \bar{\textbf{x}}_2=[-1.5,-0.75,0.75,1.5]^T, \nonumber\\ 
&\bm{\Sigma}_2=\text{diag}(0.25,0.75,1.25,1.75) \nonumber\\
& \bar{\textbf{x}}_1=\bar{\textbf{x}}_2-0.75, \quad \bm{\Sigma}_1=\textbf{I}
\end{align}
The above Gaussian parameters are slightly modified from the two class data used by \citep{fukunaga2013introduction} and \citep{xue2008unbalanced} in order to make the sample means less separated.
	
	\begin{table}[!tbph]
		\centering
		\caption{List and Characteristics of Datasets}
		{\begin{tabular}{ccccc}
				\hline
				Dataset & Label & $n$ & $d$ & $K$ \\
				\hline
				
				D1 & (a) & $ 2000 $ & $ 8 $ & $ 2 $ \\
				
				D2 & (b) & $ 2000 $ & $ 4 $ & $ 2 $ \\
				
				Liver & (c) & $345$ & $6$ & $ 2 $ \\
				
				Shuttle & (d) & $ 58000 $ & $ 9 $ & $7$ \\
				
				Vowels & (e) & $ 990 $ & $ 10 $ & $ 11 $ \\
				
				Zernike Moments & (f) & $2000$ & $47$ & $10$ \\
				
				Image Segmentation (Statlog) & (g) & $2310$ & $19$ & $7$ \\
				
				Spambase & (h) & $ 4601 $ & $ 37 $ & $ 2 $ \\
				
				Wine Quality (White) & (i) &$4898$ & $11$ & $7$ \\
				
				Pen Digits & (j) & $ 5620 $ & $ 64 $ & $ 10 $ \\
				
				Satellite (Statlog) & (k) & $ 6435 $ & $ 36 $ & $ 6 $ \\
				
				Letters & (l) & $ 20000 $ & $ 16 $ & $ 26 $ \\				
				
				\hline
			\end{tabular}}{}
			\caption*{This table lists the datasets used in the experimental section. $K$ is the number of classes, $d$ is the dimensionality of the dataset, and $n$ is the number of data points in the dataset.}		
			\label{DS1}
		\end{table}
	
	For each dataset in Table 1, we perform $10$-fold cross validation. We run $20$ different trials. On each training dataset, we evaluate the minimum Bayes error achievable by our proposed algorithm averaged over all $10$ folds and $20$ trials. If there are more than two classes, we use OvO, and calculate the mean Bayes error over all $K(K-1)/2$ discriminants. As we are interested only in linear classification, we compare the performance of the GLD with the original LDA as well as the heteroscedastic LDA procedures by \citep{fukunaga2013introduction},\citep{anderson1962classification} and \citep{marks1974discriminant} as described in Section 2 in terms of the Bayes error (9). For the sake of brevity, we denote these three heteroscedastic LDA algorithms by the annotations earlier introduced: C-HLD, R-HLD-1 and R-HLD-2 respectively. These results are shown in Table 2.
	
	Moreover, for each of the test datasets, we evaluate the average classification accuracy for each of LDA, C-HLD, R-HLD-1, R-HLD-2, GLD and GLD with local neighbourhood search (LNS). We also compare the performance of these LDA approaches to the SVM. These results are shown in Table 3, while the average training times of the algorithms are shown in Table 4.  
	
	We estimate the prior probabilities based on the relative frequencies of the data in each class in the dataset, and the stopping criterion for the GLD is thus: we stop if the gradient of $\textbf{w}$ change is less than or equal to $\epsilon_3=10^{-6}$, or else we terminate our algorithm after $I=20$ iterations and choose the solution corresponding to the minimum $p_e$. Also, for the LNS procedure, we perturb each vector element by $10\%$ of its absolute value, i.e. $\delta\tilde{w}_i=0.1|\tilde{w}_i|$, and we run for $R$=1000 iterations, terminating prematurely if $r_{max}=0.1R$. We use a step size of $\Delta s=0.001$ for the C-HLD algorithm, and run $1000$ trials for R-HLD-1 and R-HLD-2. All the parameters used in the experiments are optimised via cross-validation. Note that if the sample covariance matrix is singular, we use the Moore-Penrose pseudo-inverse.		
	
	\section{Results and Discussion}
	\begin{table}[!tbph]
					\centering
					\setlength{\tabcolsep}{12pt}
					\caption{Average Bayes Error ($\%$)}
					\resizebox{\columnwidth}{!}{
						{\begin{tabular}{cccccc}
								\hline
								Dataset & LDA & C-HLD & R-HLD-1 & R-HLD-2 & GLD \\
								\hline
								
								(a) & $ 0.0397$ & $ 0.0382$ & $ 0.0383$ & $ 0.0361$ & $ \textbf{0.0360}$ \\
								
								(b) & $ 0.0774$ & $ 0.0749$ & $ 0.0749$ & $ 0.0740$ & $ \textbf{0.0739}$ \\
								
								(c) & $ 0.9981$ & $ \textbf{0.9838}$ & $ \textbf{0.9838}$ & $ \textbf{0.9838}$ & $ \textbf{0.9838}$ \\
								
								(d) & $ \textbf{0.0001}$ & $ \textbf{0.0001}$ & $ \textbf{0.0001}$ & $ \textbf{0.0001}$ & $ \textbf{0.0001}$ \\
								
								(e) & $ 0.0339$ & $ \textbf{0.0326}$ & $ \textbf{0.0326}$ & $ \textbf{0.0326}$ & $ \textbf{0.0326}$ \\
								
								(f) & $ 0.0054$ & $ 0.0051$ & $ \textbf{0.0048}$ & $ \textbf{0.0048}$ & $ 0.0050$ \\
								
								(g) & $ 0.0037$ & $ \textbf{0.0029}$ & $ \textbf{0.0029}$ & $ \textbf{0.0029}$ & $ \textbf{0.0029}$ \\
								
								(h) & $ 0.0253$ & $ \textbf{0.022}8$ & $ \textbf{0.0228}$ & $ \textbf{0.0228}$ & $ \textbf{0.0228}$ \\
								
								(i) & $ 0.0162$ & $ 0.0201$ & $ 0.0156$ & $ 0.0155$ & $ \textbf{0.0154}$ \\
								
								(j) & $ \textbf{0.0002}$ & $ \textbf{0.0002}$ & $\textbf{ 0.0002}$ & $ \textbf{0.0002}$ & $ \textbf{0.0002}$ \\
								
								(k) & $ 0.0046$ & $ \textbf{0.0039}$ & $ \textbf{0.0039}$ & $ \textbf{0.0039}$ & $ \textbf{0.0039}$ \\
								
								(l) & $ \textbf{0.0007}$ & $ \textbf{0.0007}$ & $ \textbf{0.0007}$ & $ \textbf{0.0007}$ & $ \textbf{0.0007}$ \\
			
								\hline
							\end{tabular}}{}
							
						}
						\caption*{This table shows the average Bayes error per discriminant as a percentage for each dataset for LDA, GLD, C-HLD, R-HLD-1 and R-HLD-2. Best values are in bold.}
						\label{Bayes}
					\end{table}

		\begin{table}[!tbph]
			\centering
			\caption{Average Classification Accuracy ($\%$)}
			\resizebox{\columnwidth}{!}{
				{\begin{tabular}{cccccccc}
						\hline
						Dataset & LDA & C-HLD & R-HLD-1 & R-HLD-2 & GLD & LNS & SVM\\
						\hline
						
						(a) & $ 76.00$ & $ 77.18$ & $ 77.00$ & $ 78.48$ & $ \textbf{78.65}$ & $ 78.57$ & $ 77.47$\\
						
						(b) & $ 76.87$ & $77.93$ & $ 77.93$ & $ 78.17$ & $ \textbf{78.37}$ & $ 78.00$ & $ 77.70$\\
						
						(c) & $ 67.83$ & $ 63.19$ & $ 62.32$ & $ 62.03$ & $ 63.77$ & $ 68.12$ & $ \textbf{68.70}$\\
						
						(d) & $ 94.10$ & $ 96.60$ & $ 96.74$ & $ 96.73$ & $ 96.59$ & $ \textbf{97.91}$ & $ 84.39$\\
						
						(e) & $ 73.64$ & $ 74.14$ & $ 74.44$ & $ 74.44$ & $ 74.14$ & $75.66$ & $ \textbf{76.77}$\\
						
						(f) & $ 84.00$ & $ 83.90$ & $ 84.10$ & $ 84.15$ & $ \textbf{84.80}$ & $ 84.00$ & $ 81.90$\\
						
						(g) & $ 94.33$ & $ 94.59$ & $ 94.59$ & $ 94.63$ & $ 94.59$ & $ 94.89$ & $ \textbf{96.15}$\\
						
						(h) & $ 88.76$ & $ 88.29$ & $ 88.26$ & $ 88.15$ & $ 88.26$ & $ \textbf{90.28}$ & $ 85.68$\\
						
						(i) & $ 53.41$ & $ 46.59$ & $ 53.37$ & $ 53.33$ & $ 53.55$ & $ \textbf{54.1}4$ & $ 51.88$\\
						
						(j) & $ 96.74$ & $ 96.99$ & $ 96.97$ & $ 96.98$ & $ 97.01$ & $ 97.41$ & $ \textbf{97.84}$\\
						
						(k) & $ 85.69$ & $ 86.06$ & $ 86.06$ & $ 86.03$ & $ 86.08$ & $ 86.65$ & $ \textbf{86.85}$\\
						
						(l) & $ 81.67$ & $ 81.87$ & $ 81.83$ & $ 81.78$ & $ 81.88$ & $ 82.25$ & $ \textbf{85.39}$\\
						
						\hline
					\end{tabular}}{}
					
				}
				\caption*{This table shows the average classification accuracy (\%) on the test datasets for LDA, C-HLD, R-HLD-1, R-HLD-2, GLD, GLD+LNS and SVM. Best values are in bold.}
				\label{Rate}
			\end{table}
			
			\begin{table}[!tbph]
				\centering
				\caption{Average Training Time (s)}
				\resizebox{\columnwidth}{!}{
					{\begin{tabular}{cccccccc}
							\hline
							Dataset & LDA & C-HLD & R-HLD-1 & R-HLD-2 & GLD & LNS & SVM\\
							\hline
					
							(a) & $ \textbf{0.001}$ & $ 0.161$ & $ 0.140$ & $ 0.139$ & $ 0.002$ & $0.181$ & $ 23.192$\\
							
							(b) & $\textbf{0.001}$ & $0.142$ & $0.121$ & $0.121$ & $0.002$ & $ 0.060$ & $ 0.721$\\
							
							(c) & $ \textbf{0.001}$ & $ 0.155$ & $ 0.1415$ & $ 0.1337$ & $ 0.003$ & $ 0.028$ & $ 2.673$\\
							
							(d) & $\textbf{0.037}$ & $ 3.531$ & $ 3.023$ & $ 3.012$ & $0.089$ & $ 43.32$ & $4623.138$\\
							
							(e) & $\textbf{0.036}$ & $ 11.099$ & $ 9.409$ & $ 9.751$ & $ 0.167$ & $ 2.075$ & $ 1.173$\\
							
							(f) & $ \textbf{0.387}$ & $ 123.662$ & $ 123.649$ & $ 121.906$ & $ 1.955$ & $ 110.694$ & $23.126$\\
							
							(g) & $\textbf{0.128}$ & $ 37.320$ & $ 30.876$ & $ 37.875$ & $ 0.488$ & $ 2.143$ & $21.775$\\
							
							(h) & $ \textbf{0.101}$ & $ 10.437$ & $ 7.729$ & $ 7.474$ & $ 0.753$ & $ 36.83$ & $804.574$\\
							
							(i) & $ \textbf{0.017}$ & $ 4.257$ & $ 3.691$ & $ 3.750$ & $ 0.080$ & $ 5.928$ & $ 914.257$\\
							
							(j) & $\textbf{0.638}$ & $ 10.099$ & $9.358$ & $ 9.171$ & $ 0.915$ & $ 168.19$ & $ 409.38$\\
							
							(k) & $ \textbf{0.304}$ & $ 18.067$ & $17.842$ & $ 17.912$ & $ 0.858$ & $ 13.919$ & $311.202$\\
							
							(l) & $\textbf{0.835}$ & $73.050$ & $64.022$ & $65.414$ & $3.245$ & $ 37.202$ & $109.232$\\
							
							\hline
						\end{tabular}}{}
						
					}
					\caption*{This table shows the average training times on the test datasets for LDA, C-HLD, R-HLD-1, R-HLD-2, GLD, GLD+LNS and SVM. Best values are in bold.}
					\label{Time}
				\end{table}
				
				For real-world datasets, the covariance matrices of the classes are rarely equal, therefore the homoscedasticity assumption in LDA does not hold. Our results in Table 2 confirm that LDA does not minimise the Bayes error under heteroscedasticity, as none of the datasets used has equal covariance matrices. With the exception of datasets (d), (j) and (l), where LDA achieves an equal Bayes error as the other heteroscedastic LDA approaches, LDA is outperformed by the GLD on all remaining datasets in terms of minimising the Bayes error. It will be noted that the other three heteroscedastic LDA approaches algorithms achieve a performance comparable to the GLD on all the datasets in terms of the Bayes error. However, R-HLD-1 and R-HLD-2 require a lot of trials ($1000$ in our experiments) in order to obtain the optimal parameters $s$ and $s_1$, $s_2$ respectively, while C-HLD requires a step size of $\Delta s=0.001$ which translates to $1001$ trials. Consequently, the training time for these algorithms far exceed that of the GLD, as can be seen in Table 4. For example, the gain in training time of the GLD over C-HLD, R-HLD-1 and R-HLD-2 is over $62$ folds for dataset (g), and about $20$ folds for dataset (l). Moreover, since C-HLD, R-HLD-1 and R-HLD-2 all require matrix inversions, performing a matrix inversion for each of the $1000$ trials can be a computationally demanding task especially for high-dimensional data, which have large covariance matrices. Instead, since the GLD follows a principled optimisation procedure, the number of matrix inversions required is far lower. For example, on dataset (f), which has a dimensionality of $47$, the GLD requires over $60$ times less time to train than the other heteroscedastic LDA approaches.

				It is conceivable that the minimisation of the Bayes error would translate into a good performance in terms of the classification accuracy, if the normality assumption of LDA holds. For this reason, it can be seen in Table 3 that the GLD achieves the best classification accuracy on datasets (a) and (b), which are generated from known normal distributions. Thus, the proposed GLD algorithm is particularly suited for applications with datasets that tend to be normally distributed in each class e.g. in machine fault diagnosis, or accelerometer-based human activity recognition \citep{ojetola2015data}, as it also requires far less training time than the existing heteroscedastic LDA approaches.
				
				However, for datasets (c) through to (l), the classes do not have any known normal distribution. Therefore, minimising the Bayes error under the normality assumption would not necessarily result in a classifier that has the best classification accuracy, even if the difference in covariance matrices has been accounted for. For this reason, it is not surprising that LDA achieves a superior classification accuracy than C-HLD, R-HLD-1, R-HLD-2 and the GLD on datasets (c) and (h) as can be seen in Table 3. However, by searching around the neighbourhood of the GLD, the Local Neighbourhood Search (LNS) algorithm is able to account for the non-normality and obtain a more robust classifier. Thus, the GLD, together with the LNS procedure, achieves a higher classification accuracy than all the LDA approaches on all the real-world datasets (i.e. (c) to (l)) with the exception of dataset (f) which has the GLD showing superior classification accuracy.
				
			While the SVM outperforms the LDA approaches on half of the datasets, its training time can be rather long for large datasets. For instance, for dataset (d) which has $58000$ elements, the SVM takes about $1.3$ hours to train whereas the GLD with LNS, which achieves the best classification accuracy on this dataset, takes $43$ seconds to train, representing over $100$ fold savings in computational time over the SVM. Similar patterns can be seen in other datasets like (i), where the GLD with LNS achieves a superior classification accuracy with over $150$ times shorter training time than the SVM. This suggests that for such large datasets, the GLD with Local Neighbourhood Search is a low-complexity alternative to the SVM, as it requires far less computational time than the SVM.

            We, however, make note of two weaknesses our proposed algorithms have. For the GLD, the procedure as described in Algorithm 1, may converge to a saddle point, instead of a local minimum. Even if it were to converge to a local minimum, there is no guarantee that is the global optimum solution due to the fact that the objective function $p_e$ is known to be non-convex \cite{anderson1962classification}. Also, since the Local Neighbourhood Search involves evaluating the misclassification rate on the training set for every perturbation, the procedure does not scale well with large amounts of training data. Because of this, it is important to have a good initial solution like the GLD, so that an early termination may be performed if there is no improvement after some number of iterations.
			
			\section{Conclusion}
			In this paper, we have presented the Gaussian Linear Discriminant (GLD), a novel and computationally efficient method for obtaining a linear discriminant for heteroscedastic Linear Discriminant Analysis (LDA) for the purpose of binary classification. Our algorithm minimises the Bayes error via an iterative optimisation procedure that uses Fisher's Linear Discriminant as the initial solution. Moreover, the GLD does not require any parameter adjustments. We have also proposed a local neighbourhood search method by which a more robust linear classifier may be obtained for non-normal distributions. 
			Our experimental results on two artificial and ten real world applications show that when the covariance matrices of the classes are unequal, LDA is unable to minimise the Bayes error. Thus, under heteroscedasticity, our proposed algorithm achieves superior classification accuracy to the LDA for normally distributed classes. While the proposed GLD algorithm compares favourably with other heteroscedastic LDA approaches, the GLD requires a far less training time. Moreover, the GLD, together with the LNS, has been shown to be particularly robust, comparing favourably with the SVM, but requiring far less training time on our datasets. Thus, for expert systems like machine fault diagnosis or human activity monitoring that require linear classification, the proposed algorithms provide a low-complexity, high-accuracy solution.

            While this work has focused on linear classification, on-going work is focused on modifying the GLD procedure for the purpose of linear dimensionality reduction. Moreover, it is of particular interest to us to be able to derive the Bayes error for some known non-normal distributions. An alternative to this is to be able to obtain a kernel that implicitly transforms some data of a known non-normal distribution into a feature space where the classes are normally distributed. Finally, like all local search algorithms, the performance and complexity of the LNS procedure depends on the choice of the initial solution. Therefore, further work that explores the use initial solutions (including the heteroscedastic LDA approaches discussed) other than the GLD for the LNS procedure is being done.

\appendix
\section{}
\begin{theorem}
	Let $w_0^+$ and $w_0^-$ be the two distinct solutions of (34), then $w_0^+$ and $w_0^-$ cannot both satisfy (38) given that $\sigma_1\neq \sigma_2$.
\end{theorem}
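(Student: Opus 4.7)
The plan is to show directly that the quantity $z_2/\sigma_2 - z_1/\sigma_1$, evaluated at the two roots of the quadratic (33), differs only in sign, so that at most one root can make it non-negative. The observation that makes this clean is that this quantity is \emph{linear} in $w_0$, even though (33) itself is quadratic.

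First I would rewrite the inequality (38) in terms of $w_0$. Using the definitions of $z_1$ and $z_2$ from (20), a short manipulation gives
\begin{equation*}
\frac{z_2}{\sigma_2}-\frac{z_1}{\sigma_1}
\;=\;\frac{(w_0-\mu_2)\sigma_1^2-(w_0-\mu_1)\sigma_2^2}{\sigma_1^2\sigma_2^2}
\;=\;\frac{w_0(\sigma_1^2-\sigma_2^2)-(\mu_2\sigma_1^2-\mu_1\sigma_2^2)}{\sigma_1^2\sigma_2^2}.
\end{equation*}
So condition (38) is equivalent to $w_0(\sigma_1^2-\sigma_2^2)\ge \mu_2\sigma_1^2-\mu_1\sigma_2^2$, a linear inequality in $w_0$.

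Next I would substitute the two candidate roots from (34). Writing $\Delta := (\mu_1-\mu_2)^2+2(\sigma_1^2-\sigma_2^2)\ln(\tau\sigma_1/\sigma_2)$, we have
\begin{equation*}
w_0^\pm(\sigma_1^2-\sigma_2^2)-(\mu_2\sigma_1^2-\mu_1\sigma_2^2)
\;=\;\pm\,\sigma_1\sigma_2\sqrt{\Delta},
\end{equation*}
since the non-radical terms in the numerator of (34) cancel the constant on the right. Dividing by $\sigma_1^2\sigma_2^2>0$ yields
\begin{equation*}
\left.\frac{z_2}{\sigma_2}-\frac{z_1}{\sigma_1}\right|_{w_0=w_0^\pm}
\;=\;\pm\,\frac{\sqrt{\Delta}}{\sigma_1\sigma_2}.
\end{equation*}

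I would then close the argument by noting that since $w_0^+$ and $w_0^-$ are, by hypothesis, \emph{distinct} solutions of the quadratic (33), the discriminant satisfies $\Delta>0$; combined with $\sigma_1\sigma_2>0$, this forces the evaluations at $w_0^+$ and $w_0^-$ to have strictly opposite non-zero signs. Therefore exactly one of the two roots satisfies (38) (namely $w_0^+$, the root selected in (39)), and they cannot both satisfy it. I do not anticipate a genuine obstacle here; the only mild subtlety is justifying that distinctness of the two roots, together with $\sigma_1\ne\sigma_2$ (so the quadratic is not degenerate), implies $\Delta>0$ rather than $\Delta=0$, which is immediate from the quadratic formula.
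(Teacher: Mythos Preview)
Your argument is correct and is essentially the same computation as the paper's, just organised more cleanly: the paper substitutes each root into $z_1/\sigma_1$ and $z_2/\sigma_2$ separately and manipulates the resulting inequalities until the $w_0^-$ case collapses to $1\le 0$, whereas you first observe that $z_2/\sigma_2 - z_1/\sigma_1$ is linear in $w_0$ and then read off the sign $\pm\sqrt{\Delta}/(\sigma_1\sigma_2)$ at the two roots directly. Both routes land on the same fact---the difference takes opposite signs at $w_0^\pm$---so the content is identical; your presentation just avoids the intermediate case analysis and makes the role of the discriminant $\Delta>0$ (from distinctness of the roots) explicit.
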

\begin{proof}
	Let	
	\begin{equation}
	\beta=\sqrt{(\mu_1-\mu_2)^2+2(\sigma_1^2-\sigma_2^2)\ln\bigg(\frac{\tau\sigma_1}{\sigma_2}\bigg)}
	\end{equation}
	
	and let
	\begin{equation}
	w_0^+=\frac{\mu_2\sigma_1^2-\mu_1\sigma_2^2+\sigma_1\sigma_2\beta}{\sigma_1^2-\sigma_2^2}
	\end{equation}
	
	Then
	\begin{equation}
	\frac{z_2}{\sigma_2}=\frac{(\mu_2-\mu_1)\sigma_2+\beta\sigma_1}{\sigma_2(\sigma_1^2-\sigma_2^2)}, \quad \frac{z_1}{\sigma_1}=\frac{(\mu_2-\mu_1)\sigma_1+\beta\sigma_2}{\sigma_1(\sigma_1^2-\sigma_2^2)}
	\end{equation}
	
	Suppose that $w_0^+$ satisfies (38), then
	\begin{equation}
	\frac{(\mu_2-\mu_1)\sigma_2+\beta\sigma_1}{\sigma_2(\sigma_1^2-\sigma_2^2)}\geq\frac{(\mu_2-\mu_1)\sigma_1+\beta\sigma_2}{\sigma_1(\sigma_1^2-\sigma_2^2)}
	\end{equation}
	i.e.,
	\begin{equation}
	\frac{\beta\sigma_1^2}{\sigma_1^2-\sigma_2^2}\geq\frac{\beta\sigma_2^2}{\sigma_1^2-\sigma_2^2}
	\end{equation}
	This implies that $\sigma_1^2/(\sigma_1^2-\sigma_2^2) > \sigma_2^2/(\sigma_1^2-\sigma_2^2)$ since $\beta$ is a positive scalar.
	
	Consider now $w_0^-$ given as:
	\begin{equation}
	w_0^-=\frac{\mu_2\sigma_1^2-\mu_1\sigma_2^2-\sigma_1\sigma_2\beta}{\sigma_1^2-\sigma_2^2}
	\end{equation}
	
	Then
	\begin{equation}
	\frac{z_2}{\sigma_2}=\frac{(\mu_2-\mu_1)\sigma_2-\beta\sigma_1}{\sigma_2(\sigma_1^2-\sigma_2^2)}, \quad \frac{z_1}{\sigma_1}=\frac{(\mu_2-\mu_1)\sigma_1-\beta\sigma_2}{\sigma_1(\sigma_1^2-\sigma_2^2)}
	\end{equation}
	
	In order for (38) to be satisfied, it can be shown, similar to (A.5), that
	\begin{equation}
	\frac{-\beta\sigma_1^2}{\sigma_1^2-\sigma_2^2}\geq\frac{-\beta\sigma_2^2}{\sigma_1^2-\sigma_2^2}
	\end{equation}
	which can be simplified to give $1\leq0$. Since this conclusion is false, only $w_0^+$ satisfies (26).
\end{proof}

\section*{References}

\bibliography{GLD_Bib}

\end{document}